\definecolor{mypink2}{RGB}{219, 48, 122}
\theoremstyle{plain}
\newtheorem{thm}{Theorem}[section]
\newtheorem{prp}[thm]{Proposition}
\newtheorem*{clm*}{Claim}
\theoremstyle{definition}
\newtheorem{remrk}[thm]{Remark}
\numberwithin{equation}{section}
\newcommand{\cproof}{\noindent{\it Proof of Claim.}\ }
\newcommand{\cqed}{\hfill\rule{1.3mm}{3mm}}
\newcommand{\jf}{\color{black}}
\newcommand{\mm}{\color{black}}
\newlength\myindent
\newcounter{constant}
\newmdenv[linewidth=2pt,roundcorner=10pt,backgroundcolor=white,font=\bfseries]{questionbox}
\begin{document}
\title{The Normalized Difference Layer: A Differentiable Spectral Index Formulation for Deep Learning}

%%%%%%%%%%%%%%
\author[1]{Ali Lotfi}
\author[2]{Adam Carter}
\author[3,*]{Mohammad Meysami}
\author[4]{Thuan Ha}
\author[5]{Kwabena Nketia}
\author[6]{Steve Shirtliffe}

\affil[1]{\normalsize Nutrien Centre for Sustainable and Digital Agriculture, Department of Plant Sciences,\\ University of Saskatchewan, Saskatoon, SK, Canada\\
\texttt{all054@mail.usask.ca}}
\affil[2]{\normalsize Crop Development Centre, Department of Plant Sciences,\\ University of Saskatchewan, Saskatoon, SK, Canada\\
\texttt{adam.carter@usask.ca}}
\affil[3]{\normalsize Department of Mathematics, The University of Tulsa, Tulsa, OK, USA\\
\texttt{mohammad-meysami@utulsa.edu}}
\affil[4]{\normalsize Nutrien Centre for Sustainable and Digital Agriculture, Department of Plant Sciences,\\ University of Saskatchewan, Saskatoon, SK, Canada\\
\texttt{thuan.ha@usask.ca}}
\affil[5]{\normalsize Nutrien Centre for Sustainable and Digital Agriculture, Department of Plant Sciences,\\ University of Saskatchewan, Saskatoon, SK, Canada\\
\texttt{kwabena.nketia@usask.ca}}
\affil[6]{\normalsize Nutrien Centre for Sustainable and Digital Agriculture, Department of Plant Sciences,\\ University of Saskatchewan, Saskatoon, SK, Canada\\
\texttt{steve.shirtliffe@usask.ca}}
\affil[*]{\normalsize Corresponding author: Ali Lotfi, \texttt{all054@usask.ca}}
%%%%%%%%%%%%%%

\date{}

\maketitle

\begin{abstract}
{\mm Normalized difference indices have been a staple in remote sensing for decades. They stay reliable under lighting changes produce bounded values and connect well to biophysical signals. Even so, they are usually treated as a fixed pre processing step with coefficients set to one, which limits how well they can adapt to a specific learning task. In this study, we introduce the Normalized Difference Layer that is a differentiable neural network module. The proposed method keeps the classical idea but learns the band coefficients from data. We present a complete mathematical framework for integrating this layer into deep learning architectures that uses softplus reparameterization to ensure positive coefficients and bounded denominators. We describe forward and backward pass algorithms enabling end to end training through backpropagation. This approach preserves the key benefits of normalized differences, namely illumination invariance and outputs bounded to $[-1,1]$ while allowing gradient descent to discover task specific band weightings. We extend the method to work with signed inputs, so the layer can be stacked inside larger architectures. Experiments show that models using this layer reach similar classification accuracy to standard multilayer perceptrons while using about 75\% fewer parameters. They also handle multiplicative noise well, at 10\% noise accuracy drops only 0.17\% versus 3.03\% for baseline MLPs. The learned coefficient patterns stay consistent across different depths.}
\end{abstract}

\section{Introduction}

{\mm 
The monitoring of vegetation in arable land was revolutionized by remote sensing. It provides the capacity to carry out systematic surveillance of large areas where this would be impractical based only on field surveys \citep{xue2017significant}. Normalized difference indices, which are easy to be computed and with a clear physical interpretation \citep{bannari1995review}, is one of the most-used methods in processing multispectral data. A normalized difference index is obtained by subtracting two bands and dividing by the sum. This simple index has many practical advantages for agriculture; it is invariant to variations of lighting, is less sensitive to topographic shading and is significantly more robust under diverse sensor calibrations \citep{bannari1995review}.

The NDVI is the first index proposed by Rouse et al. \ (1973) at Texas A\&M University for the Great Plains Corridor Project and soon thereafter became a textbook example of an elementary normalized index \citep{tucker1979red}. The index was developed as a correction for the confusing effects of changing solar zenith angle on satellite data covering different latitude ranges. NDVI uses the well-known spectral behavior of photosynthetically active vegetation, which absorbs most of the red light for chlorophyll and reflects strongly near infrared light by cellular structure \citep{tucker1979red}. Using the normalized ratio of NIR to red reflectance, NDVI mitigates atmospheric influence and is a reasonably globally stable vegetation greenness measure. This single variable is demonstrated to be closely related to critical biophysical parameters, including the leaf area index (LAI), chlorophyll concentration, and fraction of vegetation cover \citep{huete2002overview}.

The way band ratios and normalized differences are constructed tends to bring out the composition we care about while pushing down many confusing effects in the scene, for example, terrain slope or changes in grain size \citep{unger2007introductory}. This behavior has been useful in precision agriculture, where farmers and researchers seek indicators of crop health they can trust and remain roughly stable across different illumination, sun angles, and atmospheric conditions \citep{haboudane2004hyperspectral}. Over time, the number of NDVI-related publications increased from 795 in the 1990s to more than 12,618 in the 2010s, indicating that normalized indices are now widely used in agricultural and environmental monitoring work \citep{tang2015drone}.

The success of NDVI has naturally encouraged the development of many other spectral indices, each designed either to overcome specific shortcomings or to make better use of certain spectral characteristics. One example is the Soil Adjusted Vegetation Index (SAVI), proposed by Huete (1988), which includes a soil brightness term to reduce the influence of background soil reflectance in areas where vegetation cover is limited \citep{huete1988soil}. The Enhanced Vegetation Index (EVI) builds on this idea by incorporating blue band reflectance to correct for remaining atmospheric effects and to limit saturation in regions with very dense biomass \citep{huete1994development}.  In more recent years, the spread of sensors with red edge bands, especially the Sentinel 2 Multi-Spectral Instrument (MSI), has led to indices such as the Normalized Difference Red Edge Index (NDRE), which are more sensitive to changes in chlorophyll content and plant nitrogen status \citep{sonobe2018crop}. These spectral indices are now widely used in crop classification and in agricultural monitoring. The red edge bands that are specific to Sentinel 2, sitting between the red and near infrared parts of the spectrum, work especially well for picking up small changes in crop canopy traits and for separating vegetation species at fairly fine taxonomic levels \citep{immitzer2016first, kang2021crop}.

Despite these advances in general vegetation monitoring, some invasive species remain difficult to identify from satellite imagery. For Kochia (\textit{Bassia scoparia}), reliably spotting the plants remains a challenge because, in many fields, they share morphological and spectral traits with surrounding crops, leading to signal mixing. Work with hyperspectral imagery has reported classification accuracies of 67--80\% for separating herbicide-resistant Kochia biotypes using support vector machine classifiers \citep{nugent2018discrimination, scherrer2019hyperspectral}. More recent studies have tried attention-based convolutional neural networks and have pushed the accuracy above 99\% when separating Kochia from sugarbeet under field conditions \citep{mensah2025detection}. These deep learning approaches are powerful, but they usually depend on costly hyperspectral sensors and fairly involved computational setups, which makes them awkward to use for everyday regional monitoring with freely available Sentinel-2 imagery. Deep learning methods that use UAV imagery and convolutional neural networks have shown they can map weeds in wheat fields accurately, even across large orthomosaic images \citep{wang2023weed}. One ongoing challenge is the time-consuming process of creating labeled training datasets. Recent studies have tackled this by using automated feature-labeling workflows, which can segment kochia and other weeds from UAV images with 87\% accuracy, all without manual training samples \citep{ha2025automated}.

Deep learning methods often achieve very strong classification performance in remote sensing, but bringing them into day to day agricultural use is still challenging because their decisions are hard to interpret. A typical deep neural network behaves like a black box, which makes it difficult to understand why the model favors one class over another \citep{reichstein2019deep, sarker2021deep}. In earth observation practice, many users care about more than accuracy alone. They also want ecological insight and a clearer sense of the processes behind the signal, and this lack of transparency can get in the way of that goal \citep{mcgovern2019making}. When models are built from polynomial functions of familiar spectral indices, the link to biophysical processes is much clearer, so experts can check whether the model behavior agrees with established scientific knowledge \citep{de2000classification}. By contrast, many deep learning models, although very good at capturing intricate patterns in the spectral and spatial domain, still behave largely as black boxes and make it difficult to interpret the ecological processes behind their predictions \citep{maxwell2021accuracy}. Recent studies suggest that systematically exploring polynomial combinations of normalized differences can surface compact and interpretable spectral indices for vegetation classification. In the case of Kochia detection with Sentinel 2 imagery, a single degree 2 index built from red edge bands reaches 96.26\% accuracy, where the key signal appears to come from interactions between spectral bands rather than from individual band ratios \citep{lotfi2025automated}.

In the last few years, more people in machine learning have been trying to build domain knowledge into neural networks, rather than relying only on patterns learned from data. \citep{raissi2019physics} in early work showed that putting physical laws directly into training as rules can help models generalize better and make smarter use of limited data. In this setup, prior knowledge is not just a step you do before training, it becomes part of how the network is shaped and how it learns. The basic idea is simple, if a model is designed to respect known physical relationships, it often needs less training data. This tends to handle new situations more smoothly, and the outputs usually look more consistent with physical reasoning.

Inductive bias is basically the set of assumptions a learning algorithm leans on when it has to make predictions for inputs it has not seen before and it has become a big part of how modern deep learning models are designed. Different architectures build in different assumptions. Convolutional networks, for example, lean on translation equivariance. Recurrent networks assume there are sequential dependencies. And attention mechanisms lean on the idea that relevance can be inferred from similarity. Every design choice in a model narrows the hypothesis space, and when those assumptions line up with the problem, learning can get a lot more efficient. In remote sensing, the normalized difference formulation is a good example of this kind of built in domain knowledge. It bakes in illumination invariance, keeps outputs in a bounded range that works well for classification, and captures ratio based relationships that research over many years has found to be useful for vegetation analysis.

Normalized difference indices are used all over remote sensing, but they still have not been built into neural network architectures as much as you might expect. In most setups spectral indices are handled as a fixed preprocessing step. We calculate NDVI, NDWI, or related indices from the raw bands and then feed those derived features into a conventional classifier. This two-stage pipeline has an important limitation---the coefficients weighting each band in the normalized difference formula are fixed at unity by convention, not optimized for the specific classification task at hand. There is no strong reason the most useful discriminative signal has to come from the symmetric form $\left(b_i - b_j\right)/\left(b_i + b_j\right)$, instead of an asymmetric combination that leans more on one band than the other. In reality, the best weighting is probably tied to the task. Finding stressed vegetation may call for different coefficients than identifying invasive species, and weights that look good on one sensor might not translate well to another.

This point leads directly to the main contribution of this work. We embed the normalized difference formulation inside the neural network as a differentiable layer with learnable coefficients. Instead of choosing band weights by hand or computing indices as a separate preprocessing step, we let gradient descent learn the best weights end to end through backpropagation. The resulting Normalized Difference Layer keeps the illumination invariance and the bounded output range that make the classic formulation so useful, but it also learns coefficient values that fit the specific classification task. More broadly, this is an example of a common idea in scientific machine learning. If you build well understood physical structure directly into neural network layers, you can often get better efficiency and more robustness than you would from an unconstrained architecture.

We evaluate the proposed architecture on Kochia detection using Sentinel-2 imagery from Saskatchewan, Canada. Kochia is an invasive weed of significant agricultural concern in North American prairies, and distinguishing it from crops presents a challenging classification problem due to spectral similarities during critical growth stages. In our experiments, we compare networks that use the learnable Normalized Difference Layer with standard multilayer perceptrons that have similar representational capacity. We look at classification accuracy, parameter efficiency, robustness to noise, and how interpretable the learned coefficients are. Overall, the domain informed architecture reaches competitive accuracy while using far fewer parameters, and it shows strong robustness to the multiplicative noise that is common in satellite imagery.

The structure of this paper is as follows. In Section~\ref{sec:methodology}, we provide the mathematical framework for the Normalized Difference Layer. It includes the gradient derivations needed for backpropagation, and it describes the algorithms for the forward and backward passes. In Section~\ref{sec:experiments} we present our experimental evaluation on the Kochia detection task and compare the proposed architecture against baseline models across several metrics, including accuracy, parameter efficiency, noise robustness, and convergence behavior. We also examine the learned coefficients to show the interpretability advantages of the proposed approach. In Section~\ref{sec:conclusion} we summarize our findings and point to directions for future work.

}

\section{Methodology}\label{sec:methodology}

Consider the Normalized Difference Vegetation Index (NDVI), perhaps the most widely used spectral index in remote sensing:
\begin{equation}
\text{NDVI} = \frac{b_{\text{NIR}} - b_{\text{Red}}}{b_{\text{NIR}} + b_{\text{Red}}}
\label{eq:ndvi}
\end{equation}
{\mm where $b_{\text{NIR}}$ and $b_{\text{Red}}$ denote the reflectance values in the near-infrared and red spectral bands, respectively. Why does this formula work so well for detecting vegetation? The key point is that it stays stable under changes in illumination. For example, suppose the true reflectance values are all multiplied by some factor $k$ because the sun angle changes, clouds create shadows, or the sensor response shifts a bit, all of which happens a lot in satellite imagery. Observe that:
}
\begin{equation}
\frac{k \cdot b_{\text{NIR}} - k \cdot b_{\text{Red}}}{k \cdot b_{\text{NIR}} + k \cdot b_{\text{Red}}} = \frac{b_{\text{NIR}} - b_{\text{Red}}}{b_{\text{NIR}} + b_{\text{Red}}}
\end{equation}
{\jf The scaling factor $k$ cancels out completely. This is what makes NDVI dependable when overall brightness changes would otherwise throw off a classifier trained on raw reflectance values. Also, the ratio form keeps the output bounded in $[-1, 1]$, which works nicely for downstream learning algorithms. For decades, remote sensing scientists have used the same formula and just swapped the band pair to focus on different phenomena, like NDWI for water, NDBI for built up areas, NBR for burn severity, and plenty of others. Each of these indices shares the same normalized difference structure---only the choice of bands changes.}

But look again at Equation~\eqref{eq:ndvi}. There is an implicit assumption hiding in plain sight: both bands receive equal weight. We could write this explicitly as:
\begin{equation}
\text{NDVI} = \frac{1 \cdot b_{\text{NIR}} - 1 \cdot b_{\text{Red}}}{1 \cdot b_{\text{NIR}} + 1 \cdot b_{\text{Red}}}
\end{equation}
{\mm Why should the coefficients be exactly 1? For a general classification task, it is easy to imagine that the best discriminative signal is not in the symmetric difference, but in an \emph{asymmetric} combination, maybe weighting NIR more than Red, or the other way around. The best weighting will likely depend on the problem. Detecting stressed vegetation may benefit from different coefficients than detecting invasive species.

This observation points to a simple question: \emph{what if we let the data choose the coefficients?} Instead of picking them by hand, we just let the model learn the weights during training, alongside everything else, using gradient descent. Then we take that idea and build it into the architecture by putting the normalized difference formulation inside the network as a differentiable layer with learnable coefficients. Unlike the usual approach where spectral indices are computed as a fixed preprocessing step, our architecture learns the band weights during training through backpropagation. At the same time, it keeps the illumination invariance and the bounded output that make normalized differences work so well.

}
\subsection{Normalized Difference Layer with Coupling Coefficients}

The core building block is a \textit{Normalized Difference Layer} (ND Layer), where each node computes a weighted normalized difference between two spectral bands. For bands $b_i$ and $b_j$, the output is defined as:
\begin{equation}
N_{ij} = \frac{\operatorname{softplus}(\alpha_{ij})\, b_i - \operatorname{softplus}(\beta_{ij})\, b_j}{\operatorname{softplus}(\alpha_{ij})\, b_i + \operatorname{softplus}(\beta_{ij})\, b_j + \epsilon}
\label{eq:nd}
\end{equation}
where $\alpha_{ij}, \beta_{ij} \in \mathbb{R}$ are learnable parameters, $\operatorname{softplus}(x) = \log(1 + e^x)$ ensures positivity, and $\epsilon > 0$ is a small constant for numerical stability.

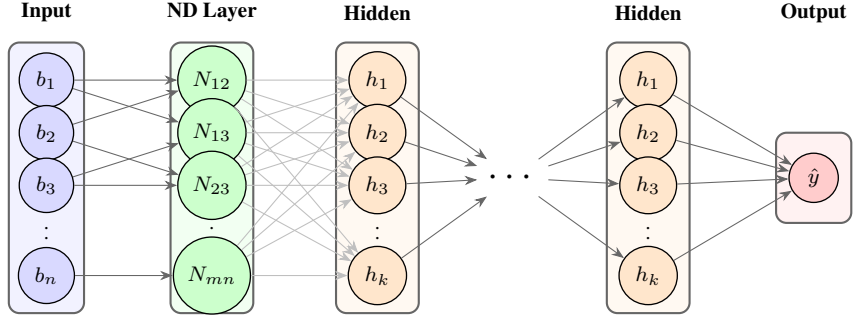
\begin{figure}[H]
\centering
\begin{tikzpicture}[
    node distance=0.5cm and 1.5cm,
    band/.style={
        circle, 
        draw=black, 
        fill=blue!15, 
        minimum size=0.6cm, 
        font=\scriptsize,
        line width=0.5pt
    },
    nd/.style={
        circle, 
        draw=black, 
        fill=green!20, 
        minimum size=0.6cm, 
        font=\scriptsize,
        line width=0.5pt
    },
    hidden/.style={
        circle, 
        draw=black, 
        fill=orange!20, 
        minimum size=0.6cm, 
        font=\scriptsize,
        line width=0.5pt
    },
    output/.style={
        circle, 
        draw=black, 
        fill=red!20, 
        minimum size=0.6cm, 
        font=\scriptsize,
        line width=0.5pt
    },
    conn/.style={->, >=Stealth, line width=0.4pt, black!60},
    layerlabel/.style={font=\scriptsize\bfseries},
    layerbox/.style={
        draw=black!60, 
        rounded corners=4pt, 
        line width=0.8pt,
        inner sep=0.2cm
    }
]

% Define vertical positions (all layers aligned to same center)
\def\ytop{0}
\def\ysecond{-0.7}
\def\ythird{-1.4}
\def\ydots{-2.0}
\def\ybottom{-2.6}

% Define horizontal positions
\def\xinput{0}
\def\xnd{2.2}
\def\xhidden{4.4}
\def\xdots{6.2}
\def\xhiddenlast{8.0}
\def\xoutput{10.2}

% ============ DRAW BOXES FIRST (behind nodes) ============
% Input box
\draw[layerbox, fill=blue!5] 
    (\xinput-0.5, \ytop+0.5) rectangle (\xinput+0.5, \ybottom-0.5);
\node[layerlabel] at (\xinput, \ytop+0.9) {Input};

% ND box
\draw[layerbox, fill=green!5] 
    (\xnd-0.55, \ytop+0.5) rectangle (\xnd+0.55, \ybottom-0.5);
\node[layerlabel] at (\xnd, \ytop+0.9) {ND Layer};

% Hidden 1 box
\draw[layerbox, fill=orange!5] 
    (\xhidden-0.55, \ytop+0.5) rectangle (\xhidden+0.55, \ybottom-0.5);
\node[layerlabel] at (\xhidden, \ytop+0.9) {Hidden};

% Hidden L box
\draw[layerbox, fill=orange!5] 
    (\xhiddenlast-0.55, \ytop+0.5) rectangle (\xhiddenlast+0.55, \ybottom-0.5);
\node[layerlabel] at (\xhiddenlast, \ytop+0.9) {Hidden};

% Output box
\draw[layerbox, fill=red!5] 
    (\xoutput-0.5, -0.7) rectangle (\xoutput+0.5, -1.9);
\node[layerlabel] at (\xoutput, \ytop+0.9) {Output};

% ============ INPUT LAYER ============
\node[band] (b1) at (\xinput, \ytop) {$b_1$};
\node[band] (b2) at (\xinput, \ysecond) {$b_2$};
\node[band] (b3) at (\xinput, \ythird) {$b_3$};
\node[font=\scriptsize] (bdots) at (\xinput, \ydots) {$\vdots$};
\node[band] (bn) at (\xinput, \ybottom) {$b_n$};

% ============ ND LAYER ============
\node[nd] (n12) at (\xnd, \ytop) {$N_{12}$};
\node[nd] (n13) at (\xnd, \ysecond) {$N_{13}$};
\node[nd] (n23) at (\xnd, \ythird) {$N_{23}$};
\node[font=\scriptsize] (ndots) at (\xnd, \ydots) {$\vdots$};
\node[nd] (nmn) at (\xnd, \ybottom) {$N_{mn}$};

% ============ HIDDEN LAYER 1 ============
\node[hidden] (h11) at (\xhidden, \ytop) {$h_1$};
\node[hidden] (h12) at (\xhidden, \ysecond) {$h_2$};
\node[hidden] (h13) at (\xhidden, \ythird) {$h_3$};
\node[font=\scriptsize] (h1dots) at (\xhidden, \ydots) {$\vdots$};
\node[hidden] (h1k) at (\xhidden, \ybottom) {$h_k$};

% ============ DOTS (MORE LAYERS) ============
\node[font=\Large] (layerdots) at (\xdots, -1.3) {$\cdots$};

% ============ HIDDEN LAYER L ============
\node[hidden] (h21) at (\xhiddenlast, \ytop) {$h_1$};
\node[hidden] (h22) at (\xhiddenlast, \ysecond) {$h_2$};
\node[hidden] (h23) at (\xhiddenlast, \ythird) {$h_3$};
\node[font=\scriptsize] (h2dots) at (\xhiddenlast, \ydots) {$\vdots$};
\node[hidden] (h2k) at (\xhiddenlast, \ybottom) {$h_k$};

% ============ OUTPUT LAYER ============
\node[output] (y) at (\xoutput, -1.3) {$\hat{y}$};

% ============ CONNECTIONS ============
% Input to ND (sparse - showing 2 inputs per ND node)
\draw[conn] (b1) -- (n12);
\draw[conn] (b2) -- (n12);
\draw[conn] (b1) -- (n13);
\draw[conn] (b3) -- (n13);
\draw[conn] (b2) -- (n23);
\draw[conn] (b3) -- (n23);
\draw[conn] (bn) -- (nmn);

% ND to Hidden 1 (dense connections)
\foreach \i in {n12, n13, n23, nmn} {
    \foreach \j in {h11, h12, h13, h1k} {
        \draw[conn, black!25] (\i) -- (\j);
    }
}

% Hidden 1 to dots
\foreach \i in {h11, h12, h13, h1k} {
    \draw[conn] (\i) -- (layerdots);
}

% Dots to Hidden L
\foreach \j in {h21, h22, h23, h2k} {
    \draw[conn] (layerdots) -- (\j);
}

% Hidden L to Output (dense)
\foreach \i in {h21, h22, h23, h2k} {
    \draw[conn] (\i) -- (y);
}

\end{tikzpicture}
\caption{Architecture of the proposed deep network with Normalized Difference Layer. The first hidden layer is a Normalized Difference Layer, where each node $N_{ij}$ receives exactly two spectral bands and computes a weighted normalized difference with learnable coefficients $\alpha_{ij}$ and $\beta_{ij}$, outputting a bounded value in $[-1, 1]$.}
\label{fig:architecture}
\end{figure}

{\mm 
When $\alpha_{ij} = \beta_{ij} = 0$, Equation~\eqref{eq:nd} is just the usual normalized difference used in indices like NDVI. Once the network learns these parameters, it can adjust the band weights for the task at hand, and it still keeps the output in $[-1, 1]$ and keeps the same illumination invariance you get from the normalized difference form.
}

\subsection{Gradient Derivation}

To enable backpropagation through the Normalized Difference Layer, we derive the partial derivatives of $N_{ij}$ with respect to the learnable parameters $\alpha_{ij}$, $\beta_{ij}$ and the inputs $b_i$, $b_j$.

\begin{prp}[Gradients of the Normalized Difference Layer]
\label{prp:gradients}
Let $\sigma_\alpha = \operatorname{softplus}(\alpha_{ij})$, $\sigma_\beta = \operatorname{softplus}(\beta_{ij})$, and $B = \sigma_\alpha b_i + \sigma_\beta b_j + \epsilon$. The partial derivatives of $N_{ij}$ defined in Equation~\eqref{eq:nd} are:
\begin{align}
\frac{\partial N_{ij}}{\partial \alpha_{ij}} &= \frac{\operatorname{sigmoid}(\alpha_{ij})\, b_i \left(2\sigma_\beta b_j + \epsilon\right)}{B^2} \label{eq:dN_dalpha} \\[0.8em]
\frac{\partial N_{ij}}{\partial \beta_{ij}} &= \frac{-\operatorname{sigmoid}(\beta_{ij})\, b_j \left(2\sigma_\alpha b_i + \epsilon\right)}{B^2} \label{eq:dN_dbeta} \\[0.8em]
\frac{\partial N_{ij}}{\partial b_i} &= \frac{\sigma_\alpha \left(2\sigma_\beta b_j + \epsilon\right)}{B^2} \label{eq:dN_dbi} \\[0.8em]
\frac{\partial N_{ij}}{\partial b_j} &= \frac{-\sigma_\beta \left(2\sigma_\alpha b_i + \epsilon\right)}{B^2} \label{eq:dN_dbj}
\end{align}
\end{prp}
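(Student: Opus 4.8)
The plan is to treat $N_{ij}$ as a quotient $A/B$ of two functions of the four variables $\alpha_{ij}, \beta_{ij}, b_i, b_j$, and to apply the quotient rule once for each derivative. Writing $A = \sigma_\alpha b_i - \sigma_\beta b_j$ for the numerator and $B = \sigma_\alpha b_i + \sigma_\beta b_j + \epsilon$ for the denominator, each of the four partials takes the form $(A'B - AB')/B^2$, where the prime denotes the relevant partial derivative. The common denominator $B^2$ already matches the right-hand sides in the statement, so the entire task reduces to computing the four numerators $A'B - AB'$ and simplifying them.

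First I would record the one nontrivial ingredient, the derivative of softplus. Since $\operatorname{softplus}(x) = \log(1+e^x)$, differentiation gives $\tfrac{d}{dx}\operatorname{softplus}(x) = \tfrac{e^x}{1+e^x} = \tfrac{1}{1+e^{-x}} = \operatorname{sigmoid}(x)$. Hence $\partial\sigma_\alpha/\partial\alpha_{ij} = \operatorname{sigmoid}(\alpha_{ij})$ and $\partial\sigma_\beta/\partial\beta_{ij} = \operatorname{sigmoid}(\beta_{ij})$, while $\sigma_\alpha$ is constant in $b_i, b_j, \beta_{ij}$ and $\sigma_\beta$ is constant in $b_i, b_j, \alpha_{ij}$. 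This is the only place the chain rule enters; everything else is linear in the remaining variables.

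The key simplification, which is what makes the final forms clean, is to note two algebraic identities relating numerator and denominator: $B - A = 2\sigma_\beta b_j + \epsilon$ and $B + A = 2\sigma_\alpha b_i + \epsilon$. For the $\alpha_{ij}$ derivative, both $A$ and $B$ share the same partial $\operatorname{sigmoid}(\alpha_{ij})\, b_i$ (the $b_i$ term appears with a plus sign in each), so $A'B - AB' = \operatorname{sigmoid}(\alpha_{ij})\, b_i\,(B - A)$, and substituting the first identity yields Equation~\eqref{eq:dN_dalpha}. The $b_i$ derivative is identical except that the common factor is $\sigma_\alpha$ rather than $\operatorname{sigmoid}(\alpha_{ij})\, b_i$, giving Equation~\eqref{eq:dN_dbi}. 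For the $\beta_{ij}$ and $b_j$ derivatives the $b_j$ term carries opposite signs in $A$ and $B$, so the numerator becomes $-\operatorname{sigmoid}(\beta_{ij})\, b_j\,(B + A)$ (respectively $-\sigma_\beta\,(B+A)$), and the second identity produces Equations~\eqref{eq:dN_dbeta} and~\eqref{eq:dN_dbj}.

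There is no genuine obstacle here beyond careful bookkeeping: the result is a direct calculus computation. The only point requiring mild attention is tracking the $\epsilon$ term, which is why the $B \pm A$ identities carry the stray $\epsilon$ rather than collapsing to $2\sigma_\beta b_j$ and $2\sigma_\alpha b_i$; keeping $\epsilon$ explicit throughout the quotient rule is what guarantees the stated formulas reduce to the classical normalized-difference gradients in the limit $\epsilon \to 0$. I would present the four computations compactly by factoring out the shared structure $A'B - AB'$ rather than expanding each quotient from scratch.
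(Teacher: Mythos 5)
Your proposal is correct and follows essentially the same route as the paper's proof: write $N_{ij} = A/B$, apply the quotient rule, observe that $A$ and $B$ share (up to sign) the same partial derivative in each variable so the numerator factors as a common factor times $B \mp A$, and then invoke the identities $B - A = 2\sigma_\beta b_j + \epsilon$ and $B + A = 2\sigma_\alpha b_i + \epsilon$. The sign bookkeeping for the $\beta_{ij}$ and $b_j$ cases matches the paper's argument exactly.
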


\begin{proof}
Let $A = \sigma_\alpha b_i - \sigma_\beta b_j$ denote the numerator of $N_{ij}$, so that $N_{ij} = A/B$. Recall that:
\begin{equation}
\frac{d}{d\alpha_{ij}}\operatorname{softplus}(\alpha_{ij}) = \operatorname{sigmoid}(\alpha_{ij}), \quad \frac{d}{d\beta_{ij}}\operatorname{softplus}(\beta_{ij}) = \operatorname{sigmoid}(\beta_{ij})
\end{equation}
The partial derivatives of $A$ are:
\begin{align}
\frac{\partial A}{\partial \alpha_{ij}} &= \operatorname{sigmoid}(\alpha_{ij})\, b_i \label{eq:dA_dalpha} \\
\frac{\partial A}{\partial \beta_{ij}} &= -\operatorname{sigmoid}(\beta_{ij})\, b_j \label{eq:dA_dbeta} \\
\frac{\partial A}{\partial b_i} &= \sigma_\alpha \label{eq:dA_dbi} \\
\frac{\partial A}{\partial b_j} &= -\sigma_\beta \label{eq:dA_dbj}
\end{align}
The partial derivatives of $B$ are:
\begin{align}
\frac{\partial B}{\partial \alpha_{ij}} &= \operatorname{sigmoid}(\alpha_{ij})\, b_i \label{eq:dB_dalpha} \\
\frac{\partial B}{\partial \beta_{ij}} &= \operatorname{sigmoid}(\beta_{ij})\, b_j \label{eq:dB_dbeta} \\
\frac{\partial B}{\partial b_i} &= \sigma_\alpha \label{eq:dB_dbi} \\
\frac{\partial B}{\partial b_j} &= \sigma_\beta \label{eq:dB_dbj}
\end{align}

For $\xi \in \{\alpha_{ij}, \beta_{ij}, b_i, b_j\}$, the quotient rule gives:
\begin{equation}
\frac{\partial N_{ij}}{\partial \xi} = \frac{1}{B^2}\left(B\frac{\partial A}{\partial \xi} - A\frac{\partial B}{\partial \xi}\right)
\label{eq:quotient}
\end{equation}
We note two following two identities to simplify the derivatives:
\begin{align}
B - A &= 2\sigma_\beta b_j + \epsilon \label{eq:BminusA} \\
B + A &= 2\sigma_\alpha b_i + \epsilon \label{eq:BplusA}
\end{align}

For $\partial N_{ij}/\partial \alpha_{ij}$: by Equations~\eqref{eq:dA_dalpha} and~\eqref{eq:dB_dalpha}, we have $\frac{\partial A}{\partial \alpha_{ij}} = \frac{\partial B}{\partial \alpha_{ij}} = \operatorname{sigmoid}(\alpha_{ij})\, b_i$. Substituting into~\eqref{eq:quotient} and applying~\eqref{eq:BminusA} yields~\eqref{eq:dN_dalpha}.

For $\partial N_{ij}/\partial \beta_{ij}$: by Equations~\eqref{eq:dA_dbeta} and~\eqref{eq:dB_dbeta}, substituting into~\eqref{eq:quotient} and applying~\eqref{eq:BplusA} yields~\eqref{eq:dN_dbeta}.

For $\partial N_{ij}/\partial b_i$: by Equations~\eqref{eq:dA_dbi} and~\eqref{eq:dB_dbi}, we have $\frac{\partial A}{\partial b_i} = \frac{\partial B}{\partial b_i} = \sigma_\alpha$. Substituting into~\eqref{eq:quotient} and applying~\eqref{eq:BminusA} yields~\eqref{eq:dN_dbi}.

For $\partial N_{ij}/\partial b_j$: by Equations~\eqref{eq:dA_dbj} and~\eqref{eq:dB_dbj}, substituting into~\eqref{eq:quotient} and applying~\eqref{eq:BplusA} yields~\eqref{eq:dN_dbj}.
\end{proof}

\subsection{Backpropagation Algorithm}

{\mm We present the backpropagation steps for the Normalized Difference Layer so it can be trained end to end with gradient descent. Let $\mathcal{L}$ denote the loss function. In the backward pass, we get $\delta_{ij} = \frac{\partial \mathcal{L}}{\partial N_{ij}}$ from the layers above. From there, we compute (1) gradients with respect to the learnable parameters $\alpha_{ij}$ and $\beta_{ij}$ for the weight updates, and (2) gradients with respect to the inputs $b_i$ so they can be passed back to the earlier layers.}

\begin{algorithm}
\caption{Forward Pass: Normalized Difference Layer}
\label{alg:forward}
\begin{algorithmic}
\REQUIRE Input bands $\mathbf{b} = (b_1, \ldots, b_n)$, parameters $\{\alpha_{ij}, \beta_{ij}\}_{i<j}$, stability constant $\epsilon$
\ENSURE ND outputs $\{N_{ij}\}_{i<j}$, cached values for backward pass
\FOR{each pair $(i, j)$ with $i < j$}
    \STATE $\sigma_\alpha \gets \log(1 + e^{\alpha_{ij}})$ \COMMENT{softplus}
    \STATE $\sigma_\beta \gets \log(1 + e^{\beta_{ij}})$
    \STATE $A_{ij} \gets \sigma_\alpha \cdot b_i - \sigma_\beta \cdot b_j$ \COMMENT{numerator}
    \STATE $B_{ij} \gets \sigma_\alpha \cdot b_i + \sigma_\beta \cdot b_j + \epsilon$ \COMMENT{denominator}
    \STATE $N_{ij} \gets A_{ij} / B_{ij}$
    \STATE \textbf{cache} $(\sigma_\alpha, \sigma_\beta, b_i, b_j, B_{ij})$ \COMMENT{store for backward pass}
\ENDFOR
\RETURN $\{N_{ij}\}_{i<j}$
\end{algorithmic}
\end{algorithm}

\begin{algorithm}[H]
\caption{Backward Pass: Normalized Difference Layer}
\label{alg:backward}
\begin{algorithmic}
\REQUIRE Upstream gradients $\{\delta_{ij}\}_{i<j}$ where $\delta_{ij} = \frac{\partial \mathcal{L}}{\partial N_{ij}}$, cached values from forward pass
\ENSURE Parameter gradients $\{\frac{\partial \mathcal{L}}{\partial \alpha_{ij}}, \frac{\partial \mathcal{L}}{\partial \beta_{ij}}\}_{i<j}$, input gradients $\{\frac{\partial \mathcal{L}}{\partial b_k}\}_{k=1}^{n}$
\STATE Initialize $\frac{\partial \mathcal{L}}{\partial b_k} \gets 0$ for all $k \in \{1, \ldots, n\}$
\FOR{each pair $(i, j)$ with $i < j$}
    \STATE \textbf{retrieve} $(\sigma_\alpha, \sigma_\beta, b_i, b_j, B_{ij})$ from cache
    \STATE $s_\alpha \gets \frac{1}{1 + e^{-\alpha_{ij}}}$ \COMMENT{sigmoid}
    \STATE $s_\beta \gets \frac{1}{1 + e^{-\beta_{ij}}}$
    \STATE $B^2 \gets B_{ij}^2$
    \STATE \COMMENT{Parameter gradients (Proposition~\ref{prp:gradients})}
    \STATE $\frac{\partial \mathcal{L}}{\partial \alpha_{ij}} \gets \delta_{ij} \cdot \frac{s_\alpha \cdot b_i \cdot (2\sigma_\beta b_j + \epsilon)}{B^2}$
    \STATE $\frac{\partial \mathcal{L}}{\partial \beta_{ij}} \gets \delta_{ij} \cdot \frac{-s_\beta \cdot b_j \cdot (2\sigma_\alpha b_i + \epsilon)}{B^2}$
    \STATE \COMMENT{Input gradients (accumulate across all pairs)}
    \STATE $\frac{\partial \mathcal{L}}{\partial b_i} \gets \frac{\partial \mathcal{L}}{\partial b_i} + \delta_{ij} \cdot \frac{\sigma_\alpha \cdot (2\sigma_\beta b_j + \epsilon)}{B^2}$
    \STATE $\frac{\partial \mathcal{L}}{\partial b_j} \gets \frac{\partial \mathcal{L}}{\partial b_j} + \delta_{ij} \cdot \frac{-\sigma_\beta \cdot (2\sigma_\alpha b_i + \epsilon)}{B^2}$
\ENDFOR
\RETURN $\left\{\frac{\partial \mathcal{L}}{\partial \alpha_{ij}}, \frac{\partial \mathcal{L}}{\partial \beta_{ij}}\right\}_{i<j}$, $\left\{\frac{\partial \mathcal{L}}{\partial b_k}\right\}_{k=1}^{n}$
\end{algorithmic}
\end{algorithm}

{\jf The backward pass applies the chain rule repeatedly. At each ND node, we take the upstream gradient $\delta_{ij}$ and scale it by the local derivatives given in Proposition~\ref{prp:gradients}. The input gradients $\frac{\partial \mathcal{L}}{\partial b_k}$ accumulate contributions from all pairs $(ij)$ that involve band $b_k$, since each input band participates in $(n-1)$ normalized difference computations.

\begin{remrk}[Computational Complexity]
For $n$ input bands, the ND Layer computes $\binom{n}{2} = \frac{n(n-1)}{2}$ pairwise outputs. Both the forward and backward passes have time complexity $O(n^2)$. The space complexity is also $O(n^2)$ due to caching intermediate values. In practice for typical spectral band counts ($n \leq 15$), this overhead is small compared to the fully connected layers that follow.
\end{remrk} 
These gradients have desirable properties for optimization. Since $\sigma_\alpha, \sigma_\beta > 0$ and $\epsilon > 0$, we get $B > \epsilon > 0$, so $B^2$ does not get close to zero. When $b_i, b_j \geq 0$ (valid reflectance values), the gradient terms stay finite and behave as expected. You can also see a simple symmetry here: the terms involving $\alpha_{ij}$ and $b_i$ include the factor $(2\sigma_\beta b_i + \epsilon)$, while the terms involving $\beta_{ij}$ and $b_j$ include $(2\sigma_\alpha b_j + \epsilon)$.
}

{\mm
\begin{remrk}[Generalization to Signed Inputs]
\label{rmk:signed}

The formulation in Equation~\eqref{eq:nd} assumes nonnegative inputs $b_i, b_j \geq 0$, which is a reasonable fit for spectral reflectance values. But if the ND Layer is used as an intermediate layer, the inputs can come from earlier layers and may include negative activations, so we use two differentiable generalizations. Replace $b_i$ and $b_j$ in the denominator with a smooth approximation of their absolute values:

\begin{equation}
N_{ij} = \frac{\sigma_\alpha\, b_i - \sigma_\beta\, b_j}{\sigma_\alpha \sqrt{b_i^2 + \epsilon} + \sigma_\beta \sqrt{b_j^2 + \epsilon} + \epsilon}
\label{eq:nd_signed}
\end{equation}
where $\sigma_\alpha = \operatorname{softplus}(\alpha_{ij})$ and $\sigma_\beta = \operatorname{softplus}(\beta_{ij})$ as before. The function $\sqrt{x^2 + \epsilon}$ is a smooth, differentiable stand in for $|x|$. It keeps the denominator strictly positive, while the numerator still carries the sign information. The output remains bounded in $[-1, 1]$. Alternatively, one may apply a softplus activation to the inputs before the ND Layer:
\begin{equation}
N_{ij} = \frac{\sigma_\alpha\, \tilde{b}_i - \sigma_\beta\, \tilde{b}_j}{\sigma_\alpha\, \tilde{b}_i + \sigma_\beta\, \tilde{b}_j + \epsilon}, \quad \text{where } \tilde{b}_k = \operatorname{softplus}(b_k)
\label{eq:nd_softplus}
\end{equation}
This approach ensures $\tilde{b}_k > 0$ and stays fully differentiable, but it does apply a nonlinear transform to the inputs. Both options also make it straightforward to stack multiple ND Layers or plug the ND Layer into larger architectures, so the method is not limited to the first layer setting studied in this work.
\end{remrk}
}

\section{Numerical Experiments}\label{sec:experiments}
{\mm
We evaluate the proposed Normalized Difference Layer by comparing it with standard neural network architectures. The experiments measure classification accuracy, parameter efficiency, noise robustness, and how interpretable the learned coefficients are, using a real world agricultural remote sensing dataset.

\subsection{Experimental Setup}

We compiled a ground truth dataset for detecting Kochia (\textit{Bassia scoparia}), an invasive weed that is a major concern in North American agriculture. The dataset includes 2,318 labeled point samples collected across three growing seasons from 2022 to 2024 in agricultural fields near Lucklake, Saskatchewan, Canada. Each sample is labeled as either Kochia (1,071 samples, 46.2\%) or Crop (1,247 samples, 53.8\%), so it is a fairly balanced binary classification task.

For each ground truth location, we extracted spectral reflectance values from Sentinel 2 Level 2A imagery, using all 10 spectral bands available at 10m and 20m resolution: B2 (Blue), B3 (Green), B4 (Red), B5--B7 (Red Edge), B8 (NIR), B8A (Narrow NIR), B11 (SWIR-1), and B12 (SWIR-2). We picked images that lined up with the field observation dates, kept only scenes with cloud cover below 20\%, and applied the Sen2Cor atmospheric correction algorithm as preprocessing.

We compare three neural network architectures of varying depths:
\begin{itemize}[leftmargin=*]
    \item \textbf{ND Model}: The proposed architecture employing a Normalized Difference Layer as the first hidden layer. For 10 input bands, this layer computes all $\binom{10}{2} = 45$ pairwise normalized differences with learnable coefficients, followed by fully-connected layers with ReLU activations.
    
    \item \textbf{MLP Model}: A standard multilayer perceptron serving as the baseline, with hidden layer widths matched to the ND model's output dimension (45 units) to ensure a fair comparison in terms of representational capacity.
    
    \item \textbf{AttND Model}: We also test an attention gated variant where the ND layer outputs are scaled by input dependent attention weights, computed as $\mathbf{q} = \sigma(\mathbf{W}\mathbf{b} + \mathbf{c})$. Here $\mathbf{b}$ is the input band vector and $\sigma$ denotes the sigmoid function. This lets us check if letting the spectral index weights adapt to each input actually improves performance.
\end{itemize}

Network depth refers to the total number of layers including input and output. At depth 2, the ND model consists of Input$\rightarrow$ND(45)$\rightarrow$Linear(1); at depth 3, an additional hidden layer is inserted; and so forth.

All models were trained using the Adam optimizer with learning rate $\eta = 0.01$ and weight decay $\lambda = 10^{-4}$. We employed binary cross-entropy loss and batch size of 32. Training proceeded for a maximum of 150 epochs with early stopping based on validation accuracy, using a patience of 25 epochs to ensure adequate convergence time for all architectures.

Model evaluation used a rigorous 10 fold stratified cross validation protocol. In each fold, we split the data into training (70\%), validation (20\%), and test (10\%) sets, and used stratification to keep the class proportions consistent in every split. We used the validation set for early stopping and for choosing hyperparameters, and we kept the test set completely separate during training, only using it at the end for the final model comparison. All experiments were run on an NVIDIA GPU using PyTorch, with the same random seeds across runs to support reproducibility across model comparisons.

\subsection{Classification Results}

Table~\ref{tab:accuracy} presents the classification accuracy for all architectures across network depths, averaged over 10 cross-validation folds.

\begin{table}
\centering
\begin{tabular}{lccc}
\toprule
\textbf{Depth} & \textbf{ND} & \textbf{MLP} & \textbf{AttND} \\
\midrule
2 & $96.50 \pm 1.28$ & $\mathbf{97.20 \pm 0.87}$ & $96.37 \pm 1.18$ \\
3 & $\mathbf{97.15 \pm 1.20}$ & $96.94 \pm 1.22$ & $97.02 \pm 1.25$ \\
4 & $\mathbf{97.63 \pm 0.87}$ & $96.98 \pm 1.02$ & $97.28 \pm 1.15$ \\
\bottomrule
\end{tabular}
\caption{Test accuracy (\%) across 10-fold cross-validation. Bold indicates highest accuracy per depth. The ND model achieves the best performance at depths 3 and 4.}
\label{tab:accuracy}
\end{table}

The ND model gives the best accuracy at depths 3 and 4. At depth 4 it reaches 97.63\%, which is the best overall result. It also has the smallest standard deviation (0.87\%), which suggests the results stay fairly steady across the splits. At depth 2, the MLP is a bit higher (97.20\% vs.\ 96.50\%), but paired $t$ tests show the gap is not statistically significant ($p = 0.133$). At depth 4, ND is ahead of MLP by 0.65 percentage points, and the difference is close but still below the usual significance cutoff ($p = 0.081$).

\begin{figure}
    \centering
    \includegraphics[width=0.85\textwidth]{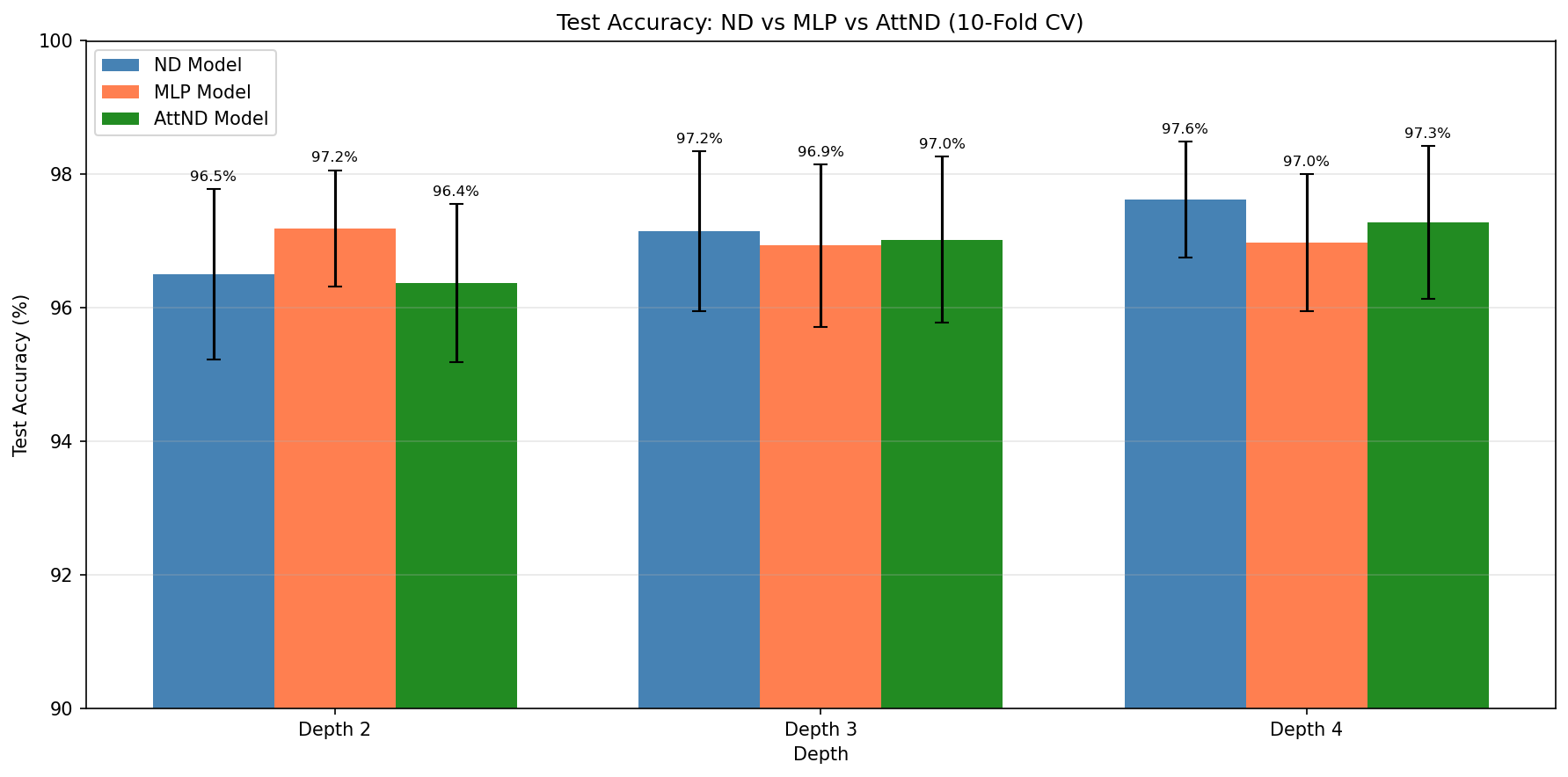}
    \caption{Test accuracy comparison across network depths. Error bars indicate standard deviation over 10 folds. All three architectures achieve comparable accuracy, with differences remaining within overlapping confidence intervals.}
    \label{fig:accuracy}
\end{figure}

The attention gated variant (AttND) did not improve on the base ND model at any depth. This suggests that, for this classification task, the fixed spectral index combinations learned during training are enough, so input dependent modulation is not really needed. This also fits the fairly homogeneous nature of the spectral discrimination problem, the cues that separate Kochia from crops stay consistent across samples. A key advantage of the proposed architecture is its parameter efficiency. Table~\ref{tab:parameters} compares the total number of trainable parameters and the resulting efficiency metric (accuracy percentage points per 100 parameters).

\begin{table}
\centering
\begin{tabular}{lcccccc}
\toprule
& \multicolumn{2}{c}{\textbf{ND}} & \multicolumn{2}{c}{\textbf{MLP}} & \multicolumn{2}{c}{\textbf{AttND}} \\
\cmidrule(lr){2-3} \cmidrule(lr){4-5} \cmidrule(lr){6-7}
\textbf{Depth} & Params & Eff. & Params & Eff. & Params & Eff. \\
\midrule
2 & 136 & \textbf{70.96} & 541 & 17.97 & 631 & 15.27 \\
3 & 2,206 & \textbf{4.40} & 2,611 & 3.71 & 2,701 & 3.59 \\
4 & 4,276 & \textbf{2.28} & 4,681 & 2.07 & 4,771 & 2.04 \\
\bottomrule
\end{tabular}
\caption{Parameter counts and efficiency (accuracy \% per 100 parameters). The ND model consistently achieves higher efficiency across all depths.}
\label{tab:parameters}
\end{table}

At depth 2, the ND model uses just 136 parameters compared to 541 for the MLP, which comes out to a 75\% reduction. Even with this much smaller footprint, the ND model reaches similar accuracy, so its parameter efficiency is nearly four times higher than the MLP baseline (70.96 vs.\ 17.97). This efficiency gain mainly comes from the structured design of the Normalized Difference Layer. Instead of learning arbitrary linear combinations, the network learns coefficients within a constrained formulation that naturally captures ratio based relationships that are known to be highly discriminative in spectral analysis.

\begin{figure}
    \centering
    \includegraphics[width=0.75\textwidth]{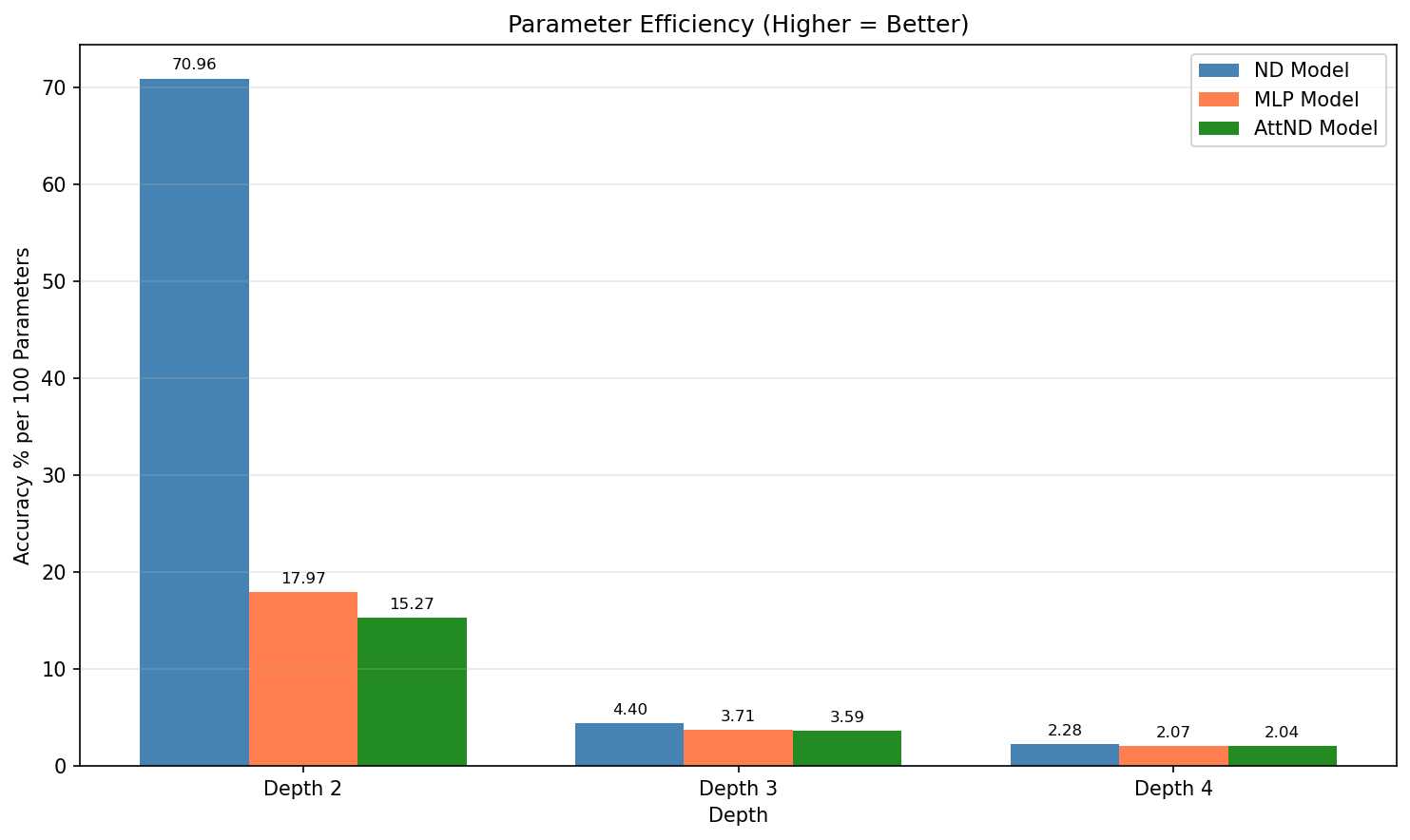}
    \caption{Parameter efficiency (accuracy per 100 parameters) across architectures. The ND model's efficiency advantage is most pronounced at depth 2, where domain-specific structure maximally constrains the hypothesis space.}
    \label{fig:efficiency}
\end{figure}

The efficiency gap gets smaller as depth increases, mainly because the extra fully connected layers account for most of the parameters. Even so, the ND model still keeps a solid advantage, which points to a real benefit from baking spectral domain knowledge into the architecture as the model becomes more complex.

\begin{figure}
    \centering
    \includegraphics[width=0.75\textwidth]{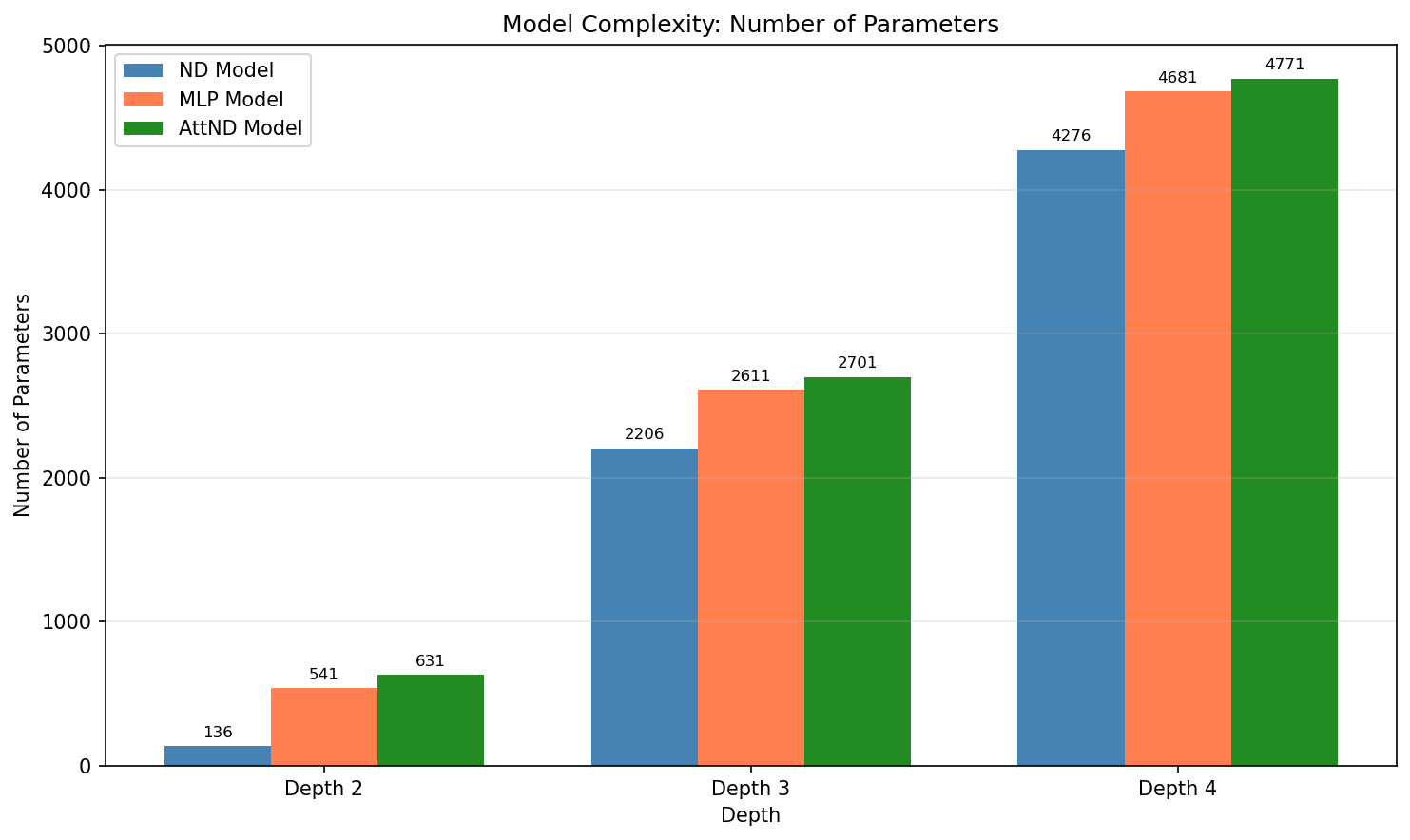}
    \caption{Total parameter count comparison. The ND architecture requires fewer parameters at all depths due to its structured first layer.}
    \label{fig:parameters}
\end{figure}

\subsection{Robustness Analysis}

Remote sensing applications have to deal with a lot of noise, like atmospheric effects, sensor calibration drift, and geometric distortions caused by changing viewing angles. We test model robustness by adding multiplicative Gaussian noise to the test inputs:

\begin{equation}
    \tilde{b}_i = b_i + \eta \cdot |b_i| \cdot z, \quad z \sim \mathcal{N}(0, 1)
\end{equation}
where $\eta$ represents the noise level as a fraction of signal magnitude. We test noise levels from 0\% to 10\%, in line with realistic Sentinel 2 uncertainty ranges that cover sensor noise (1 to 3\%), atmospheric correction residuals (3 to 5\%), and BRDF effects (5 to 10\%).

\begin{table}
\centering
\begin{tabular}{lccc}
\toprule
\textbf{Depth} & \textbf{ND Drop} & \textbf{MLP Drop} & \textbf{AttND Drop} \\
\midrule
2 & $\mathbf{0.17\%}$ & $3.03\%$ & $0.95\%$ \\
3 & $\mathbf{2.34\%}$ & $2.94\%$ & $4.85\%$ \\
4 & $5.37\%$ & $\mathbf{3.46\%}$ & $3.55\%$ \\
\bottomrule
\end{tabular}
\caption{Accuracy degradation from 0\% to 10\% noise. Lower values indicate greater robustness. The ND model exhibits remarkable stability at depth 2.}
\label{tab:noise}
\end{table}

The depth 2 ND model is very robust to noise. With 10\% noise, accuracy drops by only 0.17 percentage points, while the MLP drops by 3.03\%. This level of robustness to multiplicative noise comes straight from the normalized difference setup. If the numerator and denominator are scaled by the same factor, the ratio stays the same. Noise that affects each band separately will not cancel perfectly, but the normalized form still cuts down its impact a lot.

\begin{figure}
    \centering
    \includegraphics[width=\textwidth]{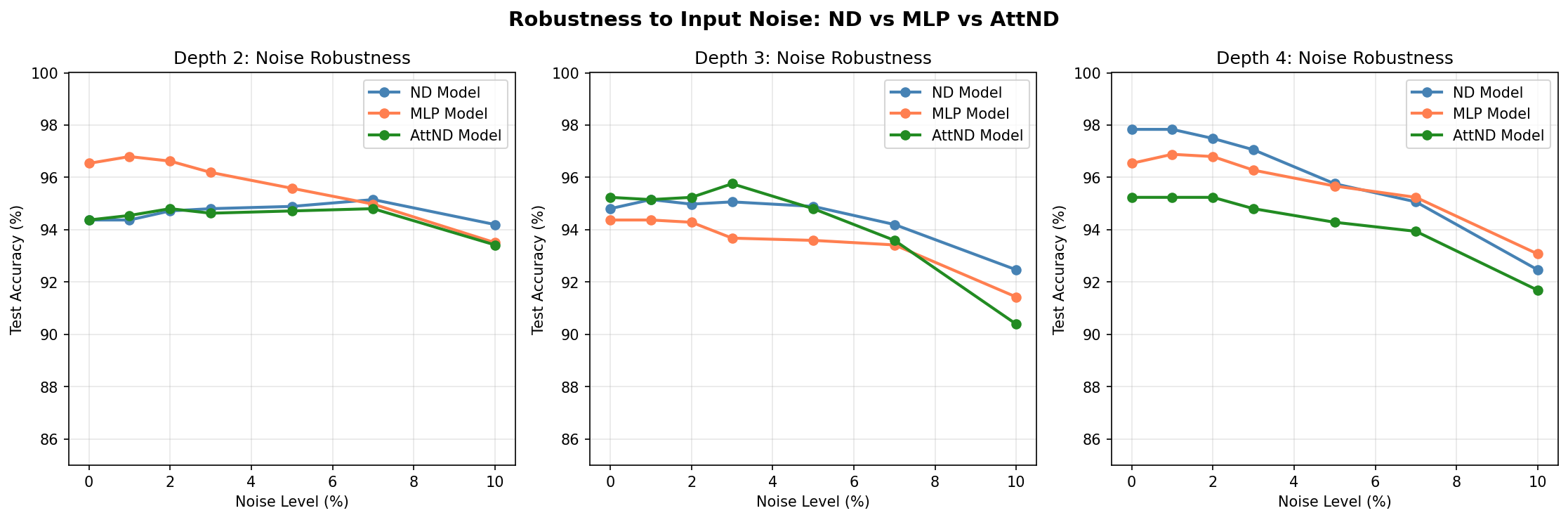}
    \caption{Accuracy under increasing input noise levels. At depth 2, the ND model maintains nearly constant performance while the MLP degrades steadily. The robustness advantage diminishes at greater depths where downstream layers introduce additional sensitivity.}
    \label{fig:noise}
\end{figure}

Figure~\ref{fig:noise} shows the ND model has the biggest robustness advantage at smaller depths. At depth 2, the ND curve is basically flat across the whole noise range, and the MLP keeps dropping as the noise goes up. That lines up with the theory. The normalized difference form gives first order protection against multiplicative noise, but once the signal passes through later unconstrained layers, that protection starts to fade.  Stability of structured computations in the presence of Gaussian noise has been studied a lot, including how perturbations change condition numbers and invertibility in high dimensional settings, in numerical analysis and random matrix theory \citep{vershynin2018high, rudelson2008invertibility, rudelson2009smallest, lotfi2022numerical, lotfi2024probabilistic}. We next examine the convergence behavior of all architectures to understand their optimization landscapes.

\begin{figure}
    \centering
    \includegraphics[width=\textwidth]{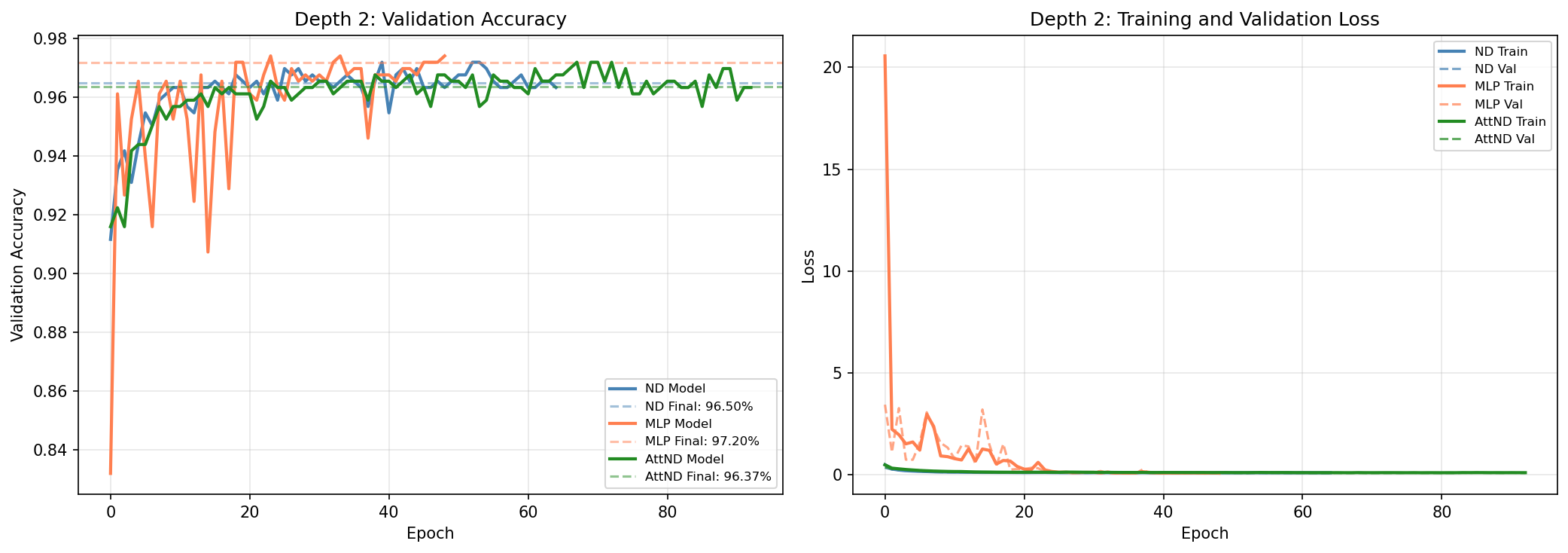}
    \caption{Training dynamics at depth 2. Left: validation accuracy over epochs. Right: training and validation loss. The ND and AttND models exhibit smooth convergence, while the MLP shows characteristic loss spikes early in training.}
    \label{fig:convergence2}
\end{figure}

\begin{figure}
    \centering
    \includegraphics[width=\textwidth]{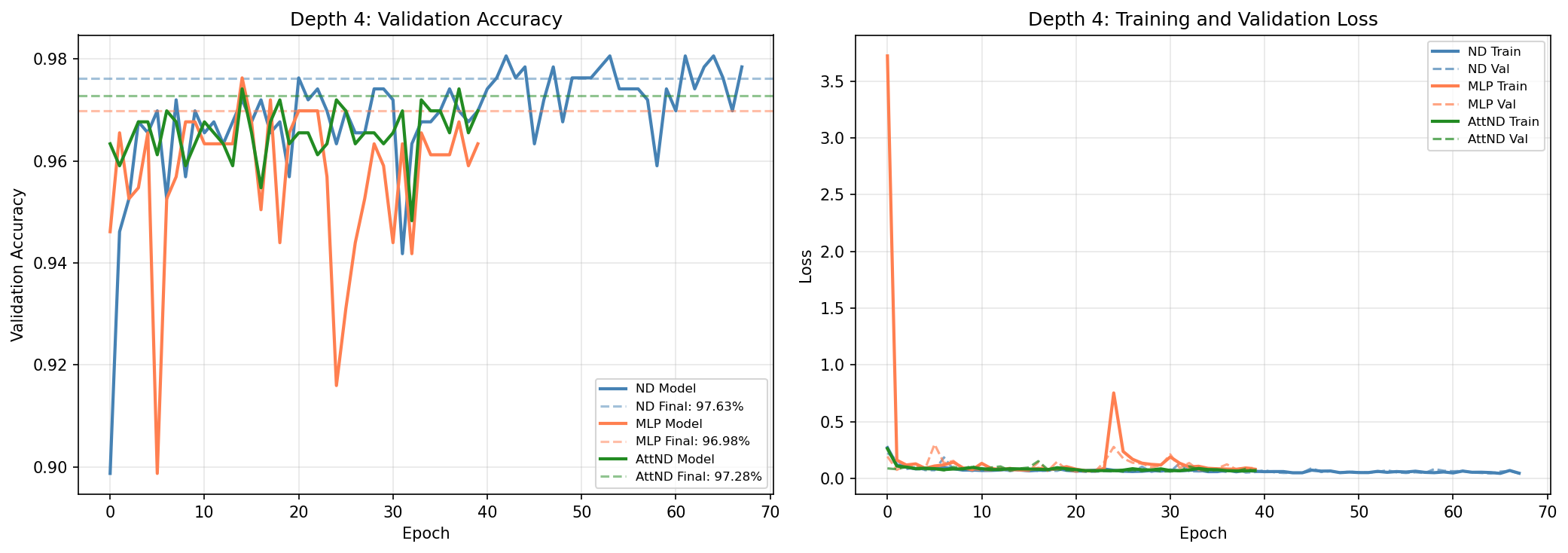}
    \caption{Training dynamics at depth 4. The ND model achieves higher peak validation accuracy and maintains stable convergence throughout training.}
    \label{fig:convergence4}
\end{figure}

The convergence plots highlight a couple of simple patterns. To start, every ND based setup tends to have a smoother loss curve than the MLP, which shows a few sharp spikes early in training. You can see those spikes in the right panels of Figures~\ref{fig:convergence2} and~\ref{fig:convergence4}, and they likely mean the MLP sometimes takes unstable gradient steps. The bounded ND layer helps keep that in check since its outputs stay in $[-1, 1]$.

Second, the ND model often reaches its best validation accuracy later than the MLP. At depth 4, the ND model kept improving through roughly epochs 40 to 77 in several folds, while the MLP usually peaked before epoch 35. That suggests the structured ND layer learns more gradually, but it often ends up in a better place overall.

\subsection{Interpretability of Learned Coefficients}

A clear advantage of the proposed architecture is that its learned parameters are easy to interpret. Each node in the ND layer learns coefficients $\alpha_{ij}$ and $\beta_{ij}$ that set how strongly bands $b_i$ and $b_j$ feed into the normalized difference computation. If we look at the ratio $\sigma_\alpha / \sigma_\beta$ (where $\sigma$ denotes the softplus function), we can see which band pairs deviate the most from the classical symmetric indices.

\begin{figure}
    \centering
    \includegraphics[width=\textwidth]{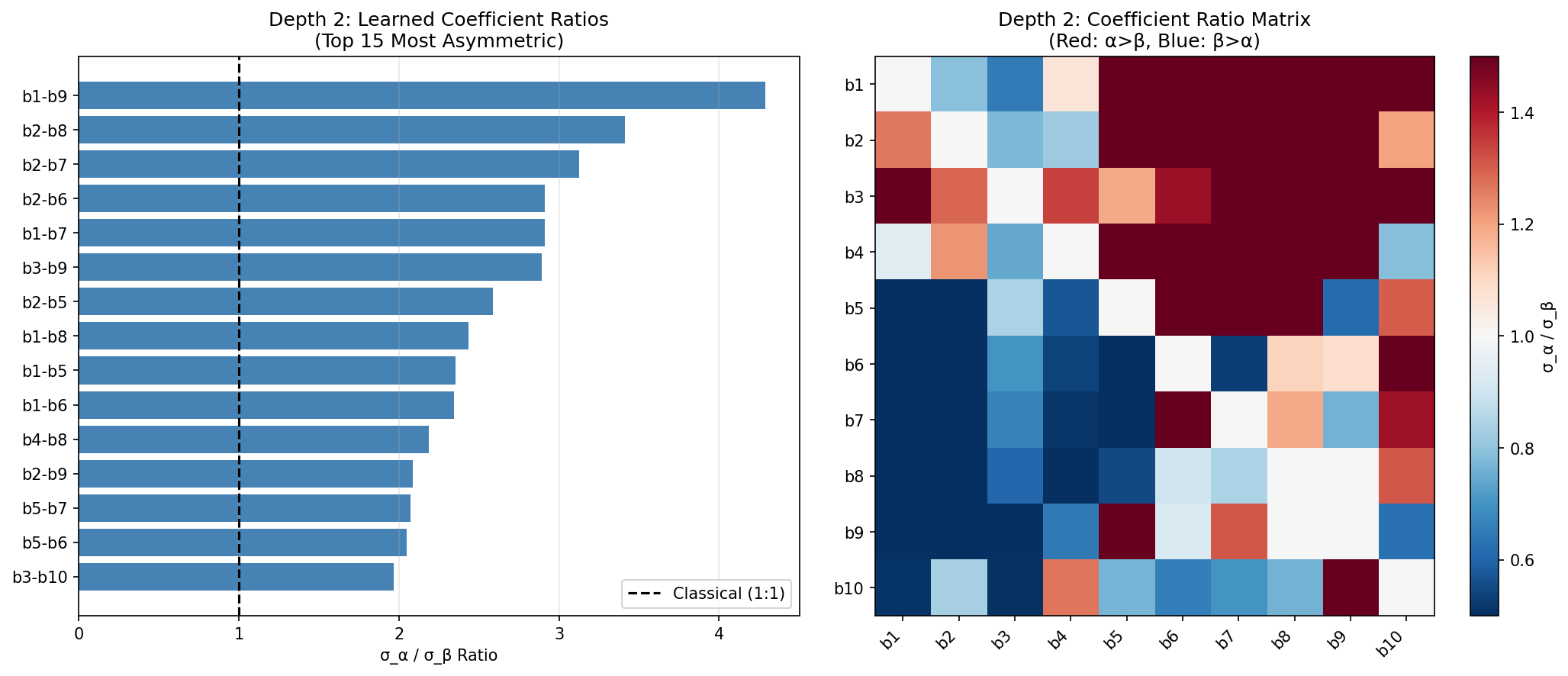}
    \caption{Learned coefficient analysis at depth 2. Left: Top 15 most asymmetric band pairs ranked by $\sigma_\alpha/\sigma_\beta$ ratio. Right: Complete coefficient ratio matrix. Red indicates $\alpha > \beta$ (first band upweighted); blue indicates $\beta > \alpha$ (second band upweighted).}
    \label{fig:weights2}
\end{figure}

Figure~\ref{fig:weights2} shows the network does not keep the usual 1 to 1 weighting. The strongest asymmetries show up when coastal aerosol (B1) or water vapor (B9) is paired with vegetation sensitive bands (B7, B8), with ratios above 4:1 in a few cases. This suggests Kochia detection works better when the normalized difference puts more emphasis on certain spectral regions than others. The coefficient matrix in the right panel shows a clear pattern. Shorter wavelength bands from B1 to B4 are often upweighted ($\alpha > \beta$, shown in red) when they are paired with longer wavelength bands from B5 to B10, and those longer wavelength bands are downweighted (blue). This imbalance seems consistent with Kochia’s spectral signature, with noticeably different reflectance in the red edge and NIR compared with typical crop canopies.

\begin{figure}
    \centering
    \includegraphics[width=\textwidth]{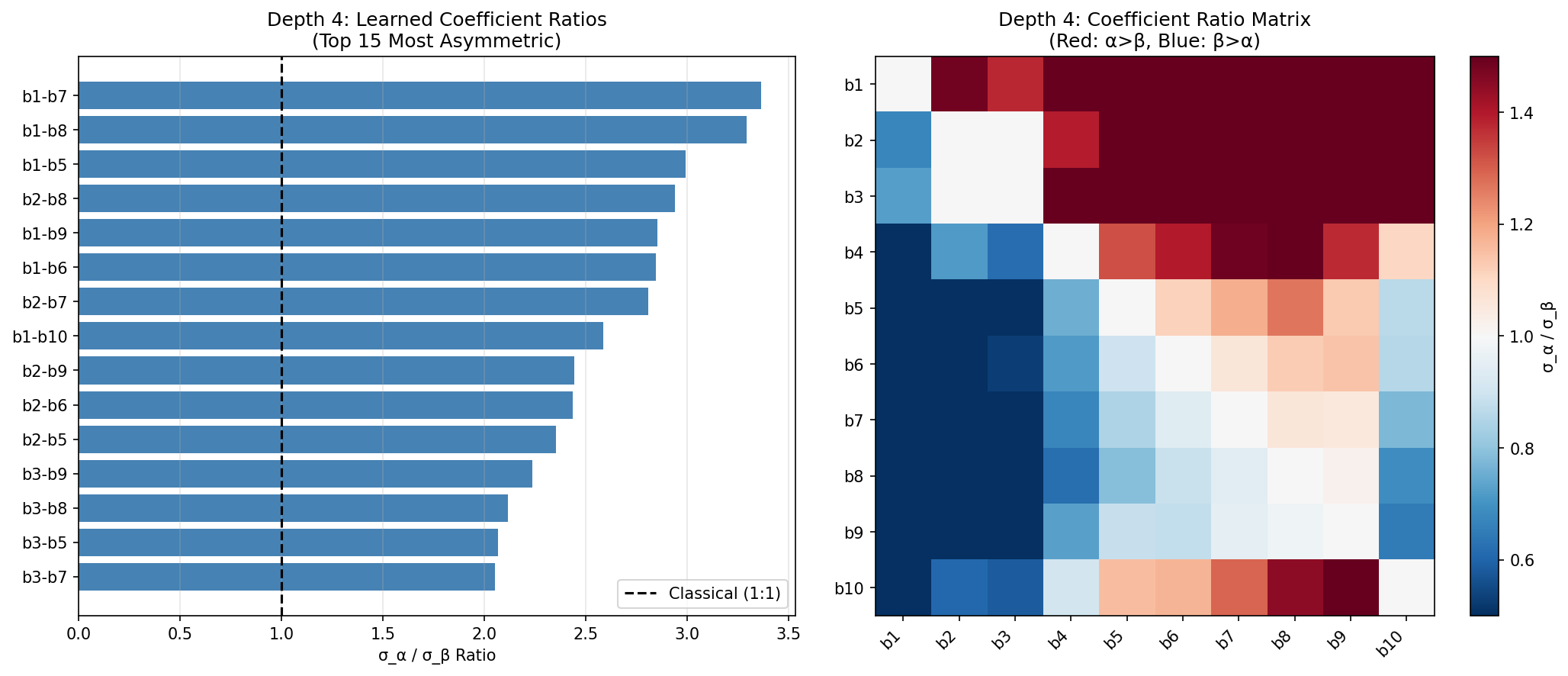}
    \caption{Learned coefficient analysis at depth 4. The asymmetric weighting patterns remain consistent across depths, indicating robust feature discovery.}
    \label{fig:weights4}
\end{figure}

The learned coefficient patterns stay consistent across network depths (see Figures~\ref{fig:weights2} and~\ref{fig:weights4}). That suggests the ND layer is capturing truly discriminative spectral relationships, not artifacts tied to one particular architectural setup. This consistency gives us more confidence that the learned weightings reflect real spectral properties of the classification problem.
}

Our experimental evaluation supports several conclusions regarding the proposed Normalized Difference Layer:

\begin{enumerate}[leftmargin=*]
    \item \textbf{Competitive accuracy}: The ND model matches or exceeds MLP performance, achieving the highest accuracy (97.63\%) at depth 4 with the lowest variance across folds.
    
    \item \textbf{Superior parameter efficiency}: At depth 2, the ND model achieves comparable accuracy with 75\% fewer parameters, resulting in nearly 4$\times$ higher parameter efficiency.
    
    \item \textbf{Exceptional noise robustness}: The shallow ND architecture exhibits near-immunity to multiplicative noise (0.17\% degradation vs.\ 3.03\% for MLP), a critical advantage for operational remote sensing applications.
    
    \item \textbf{Stable optimization}: ND-based architectures display smoother convergence with fewer loss spikes, benefiting from the bounded output range of the normalized difference formulation.
    
    \item \textbf{Interpretable representations}: The learned coefficients reveal meaningful spectral relationships that can be inspected and validated against domain knowledge, bridging the gap between black-box deep learning and interpretable spectral index design.
\end{enumerate}

As noted earlier, the AttND variant provided no improvement, suggesting simpler architectures suffice when spectral signatures are consistent within classes.

\section{Conclusion}\label{sec:conclusion}

{\mm 
In this work we introduce the Normalized Difference Layer, a neural network component built on the standard normalized difference formula, but with coefficients that the model can learn. The idea is pretty straightforward: indices like NDVI treat the two bands equally, yet there is nothing special about forcing those weights to stay at exactly 1. By letting the network learn the coefficients through gradient descent, we can find task-specific weightings while still keeping the illumination invariance and bounded outputs that make normalized differences so useful in the first place.

The mathematical setup is fairly straightforward. We derived the gradients in Proposition~\ref{prp:gradients}, which have a nice symmetric structure that makes implementation easy. Softplus keeps the learned coefficients positive, so the denominator stays away from zero and the gradients remain stable during training. When the layer is placed deeper in the network and may receive negative inputs, Remark~\ref{rmk:signed} describes two ways to extend the formulation while keeping everything differentiable. This lets the layer be stacked or embedded inside larger architectures, instead of being limited to the input.

The Kochia detection experiments give a few simple messages. First, networks that use the ND layer match or sometimes even beat standard MLPs in accuracy while using far fewer parameters. At shallow depths they need about 75\% fewer. That efficiency comes from the way the layer is set up, instead of learning any linear mix it just searches over weighted normalized differences, so the space of possibilities is much smaller. Second, the shallow ND model is quite robust to noise. With 10\% multiplicative noise, accuracy drops by only 0.17 percentage points, while the MLP loses more than 3\%. This lines up with the theory, when both bands are scaled by the same factor the ratio tends to cancel that change. Third, training is smoother and more stable, with fewer sudden spikes in the loss, because the bounded outputs keep activations from blowing up in the way they sometimes do in unconstrained networks.

What we find most interesting is that the learned coefficients remain easy to interpret. For each pair we can see how unevenly the network weights the two spectral bands, and by looking at the resulting ratios we can tell which band combinations are doing most of the work. Because these patterns stay similar across different network depths, it is a good sign that they are capturing real structure in the spectral data and not just noise from a single training run. This is useful in practice, because domain experts can check whether the learned weightings line up with what they already know about the underlying spectroscopy.

The broader point here is how we bring domain knowledge into neural network design. The broader point here is how we bring domain knowledge into neural network design. In many practical pipelines we just compute a spectral index up front, for example NDVI or another fixed formula, and then pass that single number into a generic classifier. This can work well, but it freezes the coefficient choices and gives the network no chance to refine them. In contrast, our approach keeps the index formula inside the model as a differentiable layer whose parameters are learned from data. That way we keep the structure that makes the formula useful while still allowing it to adapt to the task at hand. The same idea could carry over to other domains where well established functional forms encode prior knowledge, whether in physics, chemistry, ecology, or elsewhere.
From a theoretical point of view, this approach shows how building domain structure into neural architectures can act as a kind of implicit regularization. Recent work makes this precise, showing that continuity together with coercive regularization ensures the existence of minimizers and gives uniform control of excess risk by keeping optimization inside compact sets in parameter or function space \citep{meysami2025deep}. The normalized difference layer fills a comparable role, mainly because its outputs stay bounded and its parameters are arranged in a simple, structured way. This suggests that the classical tools of remote sensing, such as ratio based indices with illumination invariance, give more than empirical convenience. They also impose geometric constraints that support stable training and more generalizable learning.

There are several directions to explore. Another direction is to test how well the learned coefficients carry over to new sensors or different regions, which would show whether the weightings are capturing something fundamental in the spectral signatures or are mostly tuned to this specific dataset. It would also be interesting to stack multiple ND layers or combine them with spatial processing for applications where local context matters. The code for the layer is simple enough that others can plug it into their own models and try these ideas on their data without a lot of extra work.
}

\section*{Acknowledgements}

This work was supported by the Saskatchewan Ministry of Agriculture through the Agriculture Development Fund [project number 20230164].

%\bibliographystyle{plain}
%\bibliography{bibfile}

\bibliographystyle{apalike}
\bibliography{bibfile}
\end{document}